\crefname{algorithm}{Alg.}{Algs.}
\Crefname{algorithm}{Algorithm}{Algorithms}
\theoremstyle{definition}
\newtheorem{definition}{Definition}
\newtheorem{theorem}{Theorem}
\begin{document}
%
\title{Fusing First-order Knowledge Compilation and \\the Lifted Junction Tree Algorithm\thanks{To appear in ``KI-18: Advances of AI'', published by Springer}}
\author{Tanya Braun \and Ralf M\"oller\\
Institute of Information Systems\\
University of L\"ubeck, L\"ubeck, Germany\\
\{braun, moeller\}@ifis.uni-luebeck.de\\
}
\maketitle
\begin{abstract}
Standard approaches for inference in probabilistic formalisms with first-order constructs include lifted variable elimination (LVE) for single queries as well as first-order knowledge compilation (FOKC) based on weighted model counting.
To handle multiple queries efficiently, the lifted junction tree algorithm (LJT) uses a first-order cluster representation of a model and LVE as a subroutine in its computations.
For certain inputs, the implementations of LVE and, as a result, LJT ground parts of a model where FOKC has a lifted run.
The purpose of this paper is to prepare LJT as a backbone for lifted inference and to use any exact inference algorithm as subroutine.
Using FOKC in LJT allows us to compute answers faster than LJT, LVE, and FOKC for certain inputs.
\end{abstract}

\noindent
AI areas such as natural language understanding and machine learning need efficient inference algorithms. 
Modeling realistic scenarios yields large probabilistic models, requiring reasoning about sets of individuals.
Lifting uses symmetries in a model to speed up reasoning with known domain objects.
We study probabilistic inference in large models that exhibit symmetries with queries for probability distributions of random variables (randvars).

In the last two decades, researchers have advanced probabilistic inference significantly.
Propositional formalisms benefit from variable elimination (VE), which decomposes a model into subproblems and evaluates them in an efficient order \cite{ZhaPo94}.
Lifted VE (LVE), introduced in \cite{PooZh03} and expanded in \cite{Braz07,MilZeHaKa08,TagDa12}, saves computations by reusing intermediate results for isomorphic subproblems.
\mbox{Taghipour et al.\ }formalise LVE by defining lifting operators while decoupling the constraint language from the operators \cite{TagFiDaBl13}.
The lifted junction tree algorithm (LJT) sets up a first-order junction tree (FO jtree) to handle multiple queries efficiently \cite{BraMo16a}, using LVE as a subroutine.
LJT is based on the propositional junction tree algorithm \cite{LauSp88}, which includes a junction tree (jtree) and a reasoning algorithm for efficient handling of multiple queries.
Approximate lifted inference often uses lifting in conjunction with belief propagation \cite{SinDo08,GogDo10,AhmKeMlNa13}.
To scale lifting, \mbox{Das et al.\ }use graph databases storing compiled models to count faster \cite{DaWuKhKeNa16}.
Other areas incorporate lifting to enhance efficiency, e.g., in continuous or dynamic models \cite{ChoiAmHi12,VlaMeBrRa16}, logic programming \cite{BelLaRiCoZe14}, and theorem proving \cite{GogDo11}.

Logical methods for probabilistic inference are often based on weighted model counting (WMC) \cite{ChaDa08}.
Propositional knowledge compilation (KC) compiles a weighted model into a deterministic decomposable negation normal form (d-DNNF) circuit for probabilistic inference \cite{DarMa02}.
\mbox{Chavira and Darwiche} combine VE and KC as well as algebraic decision diagrams for local symmetries to further optimise inference runtimes \cite{ChaDa07}.
\mbox{Van den Broeck et al.\ }apply lifting to KC and WMC, introducing weighted first-order model counting (WFOMC) and a first-order d-DNNF \cite{BroTaMeDaRa11,BroDa12}, with newer work on asymmetrical models \cite{BroNi15}.

For certain inputs, LVE, LJT, and FOKC start to struggle either due to model structure or size.
The implementations of LVE and, as a consequence, LJT ground parts of a model if randvars of the form $Q(X), Q(Y), X \not= Y$ appear, where parameters $X$ and $Y$ have the same domain, even though in theory, LVE handles those occurrences of just-different randvars \cite{ApsBr11}.
While FOKC does not ground in the presence of such constructs in general, it can struggle if the model size increases.
The purpose of this paper is to prepare LJT as a backbone for lifted query answering (QA) to use any exact inference algorithm as a subroutine.
Using FOKC and LVE as subroutines, we fuse LJT, LVE, and FOKC to compute answers faster than LJT, LVE, and FOKC alone for the inputs described above.

The remainder of this paper is structured as follows:
First, we introduce notations and FO jtrees and recap LJT.
Then, we present conditions for subroutines of LJT, discuss how LVE works in this context and FOKC as a candidate, before fusing LJT, LVE, and FOKC.
We conclude with future work.

\section{Preliminaries}
This section introduces notations and recap LJT.
We specify a version of the smokers example (e.g., \cite{BroTaMeDaRa11}), where two friends are more likely to both smoke and smokers are more likely to have cancer or asthma.
Parameters allow for representing people, avoiding explicit randvars for each individual.

\subsubsection{Parameterised Models}
To compactly represent models with first-order constructs, parameterised models use logical variables (logvars) to parameterise randvars, abbreviated PRVs.
They are based on work by Poole \cite{Poo03}. 
\begin{definition}
	Let $\mathbf{L}$, $\Phi$, and $\mathbf{R}$ be sets of logvar, factor, and randvar names respectively.
	A \emph{PRV} $R(L_1, \dots, L_n)$, $n \geq 0$, is a syntactical construct with $R \in \mathbf{R}$ and $L_1, \dots, L_n \in \mathbf{L}$ to represent a set of randvars. 
	For PRV $A$, the term $range(A)$ denotes possible values.
	A logvar $L$ has a domain $\mathcal{D}(L)$.
	A \emph{constraint} $(\mathbf{X},C_{\mathbf{X}})$ is a tuple with a sequence of logvars $\mathbf{X} = (X_1, \dots, X_n)$ and a set $C_{\mathbf{X}} \subseteq \times_{i = 1}^n\mathcal{D}(X_i)$ restricting logvars to given values.
	The symbol $\top$ marks that no restrictions apply and may be omitted.
	For some $P$, the term $lv(P)$ refers to its logvars, $rv(P)$ to its PRVs with constraints, and $gr(P)$ to all instances of $P$ grounded w.r.t.\ its constraints.
\end{definition}

For the smoker example, let $\mathbf{L} = \{X, Y\}$ and $\mathbf{R} = \{Smokes, Friends\}$ to build boolean PRVs $Smokes(X)$, $Smokes(Y)$, and $Friends(X,Y)$.
We denote $A = true$ by $a$ and $A = false$ by $\neg a$.
Both logvar domains are $\{alice, eve, bob\}$.
An inequality $X \not= Y$ yields a constraint $C = ((X,Y), \{(alice{,}eve), (alice{,}bob),\linebreak (eve{,}alice), (eve{,}bob), (bob{,}alice), (bob{,}eve)\})$.
$gr(Friends(X,Y) | C)$ refers to all propositional randvars that result from replacing $X,Y$ with the tuples in $C$.
Parametric factors (parfactors) combine PRVs as arguments.
A parfactor describes a function, identical for all argument groundings, that maps argument values to the reals (potentials), of which at least one is non-zero.
\begin{definition}
	Let $\mathbf{X} \subseteq \mathbf{L}$ be a set of logvars, $\mathcal{A} = (A_1, \dots, A_n)$ a sequence of PRVs, each built from $\mathbf{R}$ and possibly $\mathbf{X}$, $\phi : \times_{i = 1}^n range(A_i) \mapsto \mathbb{R}^+$ a function, $\phi \in \Phi$, and $C$ a constraint $(\mathbf{X}, C_\mathbf{X})$.
	We denote a \emph{parfactor} $g$ by $\forall \mathbf{X} : \phi(\mathcal{A}) | C$.
	We omit $(\forall \mathbf{X} :)$ if $\mathbf{X} = lv(\mathcal{A})$.
	A set of parfactors forms a \emph{model} $G := \{g_i\}_{i=1}^n$.
\end{definition}

We define a model $G_{ex}$ for the smoker example, adding the binary PRVs $Cancer(X)$ and $Asthma(X)$ to the ones above.
The model reads $G_{ex} = \{g_i\}_{i=0}^5$, 
\begin{align*}
	g_0 &= \phi_0(Friends(X,Y), Smokes(X), Smokes(Y)) | C,	\\
	g_1 &= \phi_1(Friends(X,Y)) | C, 			\\
	g_2 &= \phi_2(Smokes(X)) | \top, 			\\
	g_3 &= \phi_3(Cancer(X)) | \top,			\\
	g_4 &= \phi_5(Smokes(X), Asthma(X)) | \top,	\\
	g_5 &= \phi_4(Smokes(X), Cancer(X)) | \top.	
\end{align*}
$g_0$ has eight, $g_1$ to $g_3$ have two, and $g_4$ and $g_5$ four input-output pairs (omitted here). 
Constraint $C$ refers to the constraint given above. 
The other constraints are $\top$.
\Cref{fig:exFG} depicts $G_{ex}$ as a graph with five variable nodes and six factor nodes for the PRVs and parfactors with edges to arguments.


The \emph{semantics} of a model $G$ is given by grounding and building a full joint distribution.
With $Z$ as the normalisation constant, $G$ represents the full joint probability distribution $P_G = \frac{1}{Z} \prod_{f \in gr(G)} f$.
The QA problem asks for a likelihood of an event, a marginal distribution of some randvars, or a conditional distribution given events, all queries boiling down to computing marginals w.r.t.\ a model's joint distribution.
Formally, $P(\mathbf{Q} | \mathbf{E})$ denotes a (conjunctive) query with $\mathbf{Q}$ a set of grounded PRVs and $\mathbf{E} = \{E_k = e_k\}_k$ a set of events (grounded PRVs with range values).
If $\mathbf{E} = \emptyset$, the query is for a conditional distribution.
A query for $G_{ex}$ is $P(Cancer(eve) | friends(eve, bob), smokes(bob))$.
We call $\mathbf{Q} = \{Q\}$ a singleton query.
Lifted QA algorithms seek to avoid grounding and building a full joint distribution.
Before looking at lifted QA, we introduce FO jtrees.

\begin{figure}
\centering
\begin{tikzpicture}[rv/.style={draw, ellipse, inner sep=1.5pt, minimum width=22mm, minimum height=6mm},pf/.style={draw, rectangle, fill=gray},label distance=0.5mm]
	\node[rv]													(SY)		{};
	\node[draw = none]											(Smy)	{$Smokes(Y)$};
	\node[pf, right of=SY, node distance=18mm, yshift=1mm, xshift=-1mm]	(XYa)	{};
	\node[pf, right of=SY, node distance=18mm, label=330:{$g_0$}]		(XY)		{};
	\node[rv, right of=XY, node distance=20mm, minimum width=26mm]		(F)	{};
	\node[draw=none, right of=XY, node distance=20mm]				(Fr)		{$Friends(X,Y)$};
	\node[pf, right of=F, node distance=17mm, yshift=1mm, xshift=-1mm]	(Fga)		{};
	\node[pf, right of=F, node distance=17mm, label=0:{$g_1$}]			(Fg)		{};
	\node[pf, right of=F, node distance=17mm, yshift=-1mm, xshift=1mm]	(Fgb)		{};
	\node[rv, below of=XY, yshift=2mm] 								(SX)		{};
	\node[draw=none, below of=XY, yshift=2mm] 						(Smx)	{$Smokes(X)$};
	\node[pf, right of=SX, node distance=15mm, yshift=1mm, xshift=-1mm]	(Sga)		{};
	\node[pf, right of=SX, node distance=15mm, label=0:{$g_2$}]			(Sg)		{};
	\node[pf, right of=SX, node distance=15mm, yshift=-1mm, xshift=1mm]	(Sgb)		{};
	\draw (SY) -- (XY);
	\draw (XY) -- (F);
	\draw (XY) -- (SX);
	\draw (Sg) -- (SX);
	\draw (Fg) -- (F);
	\node[pf, right of=SY, node distance=18mm, xshift=1mm, yshift=-1mm]	(XYb)	{};
	\node[rv, left of=SX, node distance=25mm, yshift=-6mm]				(A)		{};
	\node[draw=none, left of=SX, node distance=25mm, yshift=-6mm]		(Ax)		{$Asthma(X)$};
	\node[rv, right of=SX, node distance=25mm, yshift=-6mm]				(C)		{};
	\node[draw=none, right of=SX, node distance=25mm, yshift=-6mm]		(Cx)		{$Cancer(X)$};
	\node[pf, right of=C, node distance=14.5mm, yshift=1mm, xshift=-1mm]	(Cga)		{};
	\node[pf, right of=C, node distance=14.5mm, label=0:{$g_3$}]			(Cg)		{};
	\node[pf, right of=C, node distance=14.5mm, yshift=-1mm, xshift=1mm]	(Cgb)		{};
	\node[pf, left of=SX, xshift=0mm, yshift=-5mm]						(ASa)	{};
	\node[pf, left of=SX, label=0:{$g_4$},yshift=-6mm, xshift=1mm]			(AS)		{};
	\draw (SX) -- (AS);
	\draw (AS) -- (A);
	\draw (Cg) -- (C);
	\node[pf, left of=SX, xshift=2mm, yshift=-7mm]						(ASb) 	{};
	\node[pf, right of=SX, xshift=-2mm, yshift=-5mm]					(CSa) 	{};
	\node[pf, right of=SX, label=180:{$g_5$},yshift=-6mm, xshift=-1mm]		(CS)		{};
	\draw (SX) -- (CS);
	\draw (CS) -- (C);
	\node[pf, right of=SX, xshift=0mm, yshift=-7mm]					(CSb)	{};
\end{tikzpicture}
\caption{Parfactor graph for $G_{ex}$}
\label{fig:exFG}
\end{figure}
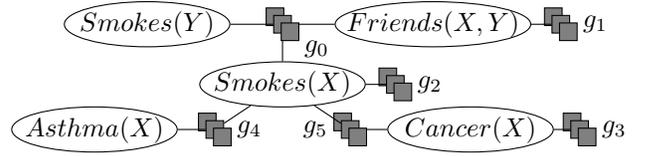
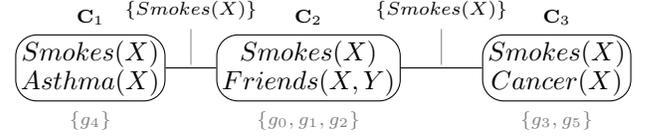
\begin{figure}
\centering
\begin{tikzpicture}[pc/.style={draw, rounded corners=8pt, align=center, node distance=29mm, minimum height=6mm, inner sep=2pt}]
	\node[pc, label={[gray]270:{\scriptsize $\{g_4\}$}},label={90:{\scriptsize $\mathbf{C}_1$}}]							(c1) {$Smokes(X)$\\ $Asthma(X)$};
	\node[pc, right of=c1, label={[gray]270:{\scriptsize $\{g_0, g_1, g_2\}$}},label={90:{\scriptsize $\mathbf{C}_2$}}]			(c2) {$Smokes(X)$\\ $Friends(X,Y)$};
	\node[pc, right of=c2, xshift=4mm, label={[gray]270:{\scriptsize $\{g_3, g_5\}$}},label={90:{\scriptsize $\mathbf{C}_3$}}]	(c3) {$Smokes(X)$\\ $Cancer(X)$};
	\draw (c1) -- node[inner sep=1pt, pin={[yshift=0mm]90:{\scriptsize $\{Smokes(X)\}$}}]	{} (c2);
	\draw (c2) -- node[inner sep=1pt, pin={[yshift=0mm]90:{\scriptsize $\{Smokes(X)\}$}}] 	{} (c3);
\end{tikzpicture}
\caption{FO jtree for $G_{ex}$ (local models in grey)}
\label{fig:fojt}
\end{figure}

\subsubsection{First-order Junction Trees}
LJT builds an FO jtree to cluster a model into submodels that contain all information for a query after propagating information.
An FO jtree, defined as follows, constitutes a lifted version of a jtree. 
Its nodes are parameterised clusters (parclusters), i.e., sets of PRVs connected by parfactors.
\begin{definition}
	Let $\mathbf{X}$ be a set of logvars, $\mathbf{A}$ a set of PRVs with $lv(\mathbf{A}) \subseteq \mathbf{X}$, and $C$ a constraint on $\mathbf{X}$.
	Then, $\forall \mathbf{X} {:} \mathbf{A} | C$ denotes a \emph{parcluster}.
	We omit $(\forall \mathbf{X}{:})$ if $\mathbf{X} = lv(\mathbf{A})$.
	An \emph{FO jtree} for a model $G$ is a cycle-free graph $J = (V, E)$, where $V$ is the set of nodes (parclusters) and $E$ the set of edges.
	$J$ must satisfy three properties:
	\begin{inparaenum}[(i)]
		\item $\forall \mathbf{C}_i \in V$: $\mathbf{C}_i \subseteq rv(G)$.
		\item $\forall g  \in G$: $\exists \mathbf{C}_i \in V$ s.t.\ $rv(g) \subseteq \mathbf{C}_i$.
		\item If $\exists A \in rv(G)$ s.t.\ $A \in \mathbf{C}_i \wedge A \in \mathbf{C}_j$, then $\forall \mathbf{C}_k$ on the path between $\mathbf{C}_i$ and $\mathbf{C}_j$: $A \in \mathbf{C}_k$.
	\end{inparaenum}
	The parameterised set $\mathbf{S}_{ij}$, called \emph{separator} of edge $\{i, j\} \in E$, is defined by $\mathbf{C}_i \cap \mathbf{C}_j$.
	The term $nbs(i)$ refers to the neighbours of node $i$. 
	Each $\mathbf{C}_i \in V$ has a \emph{local model} $G_i$ and $\forall g \in G_i$: $rv(g) \subseteq \mathbf{C}_i$.
	The $G_i$'s partition $G$.
\end{definition}
\Cref{fig:fojt} shows an FO jtree for $G_{ex}$ with the following parclusters, 
\begin{align*}
	\mathbf{C}_1 &= \forall X : \{Smokes(X), Asthma(X)\} | \top,	\\
	\mathbf{C}_2 &= \forall X, Y : \{Smokes(X), Friends(X,Y)\} | C,	\\
	\mathbf{C}_3 &= \forall X : \{Smokes(X), Cancer(X)\} | \top.
\end{align*}
Separators are $\mathbf{S}_{12} = \mathbf{S}_{23} = \{Smokes(X)\}$.
As $Smokes(X)$ and $Smokes(Y)$ model the same randvars, $\mathbf{C}_2$ names only one.
Parfactor $g_2$ appears at $\mathbf{C}_2$ but could be in any local model as $rv(g_2) = \{Smokes(X)\} \subset \mathbf{C}_i\ \forall\ i \in \{1,2,3\}$.
We do not consider building FO jtrees here (cf.\ \cite{BraMo16a} for details).

\begin{algorithm}
\caption{Outline of the Lifted Junction Tree Algorithm}
\label{alg:ljt}
\begin{algorithmic}
	\Procedure{LJT}{Model $G$, Queries $\{\mathbf{Q}_j\}_{j=1}^{m}$, Ev. $\mathbf{E}$}
	\State Construct FO jtree $J$ for $G$
	\State Enter $\mathbf{E}$ into $J$
	\State Pass messages on $J$
	\For{each query $\mathbf{Q}_j$}
		\State Find subtree $J'$ for $\mathbf{Q}_j$
		\State Extract submodel $G'$ from $J'$
		\State Answer $\mathbf{Q}_j$ on $G'$
	\EndFor
	\EndProcedure
\end{algorithmic}
\end{algorithm}

\subsubsection{Lifted Junction Tree Algorithm}
LJT answers a set of queries efficiently by answering queries on smaller submodels.
\Cref{alg:ljt} outlines LJT for a set of queries (cf.\ \cite{BraMo16a} for details).
LJT starts with constructing an FO jtree.
It enters evidence for a local model to absorb whenever the evidence randvars appear in a parcluster.
Message passing propagates local information through the FO jtree in two passes:
LJT sends messages from the periphery towards the center and then back.
A message is a set of parfactors over separator PRVs.
For a message $m_{ij}$ from node $i$ to neighbour $j$, LJT eliminates all PRVs not in separator $\mathbf{S}_{ij}$ from $G_i$ and the messages from other neighbours using LVE.
Afterwards, each parcluster holds all information of the model in its local model and received messages. 
LJT answers a query by finding a subtree whose parclusters cover the query randvars, extracting a submodel of local models and outside messages, and answering the query on the submodel.
In the original LJT, LJT eliminates randvars for messages and queries using LVE.
\section{LJT as a Backbone for Lifted Inference}
LJT provides general steps for efficient QA given a set of queries.
It constructs an FO jtree and uses a subroutine to propagate information and answer queries.
To ensure a lifted algorithm run without groundings, evidence entering and message passing impose some requirements on the algorithm used as a subroutine.
After presenting those requirements, we analyse how LVE matches the requirements and to what extend FOKC can provide the same service.

\subsubsection{Requirements}
LJT has a domain-lifted complexity, meaning that if a model allows for computing a solution without grounding part of a model, LJT is able to compute the solution without groundings, i.e., has a complexity linear in the domain size of the logvars.
Given a model that allows for computing solutions without grounding part of a model, the subroutine must be able to handle message passing and query answering without grounding to maintain the domain-lifted complexity of LJT.

Evidence displays symmetries if observing the same value for $n$ instances of a PRV~\cite{TagFiDaBl13}.
Thus, for evidence handling, the algorithm needs to be able to handle a set of observations for some instances of a single PRV in a lifted way.
Calculating messages entails that the algorithm is able to calculate a form of parameterised, conjunctive query over the PRVs in the separator.
In summary, LJT requires the following:
\begin{enumerate}
	\item Given evidence in the form of a set of observations for some instances of a single PRV, the subroutine must be able to absorb the evidence independent of the size of the number of instances in the set.
	\item Given a parcluster with its local model, messages, and a separator, the subroutine must be able to eliminate all PRVs in the parcluster that do not appear in the separator in a domain-lifted way.
\end{enumerate}

The subroutine also establishes which kind of queries LJT can answer.
The expressiveness of the query language for LJT follows from the expressiveness of the inference algorithm used.
If an algorithm answers queries of single randvar, LJT answers this type of query.
If an algorithm answers maximum a posteriori (MAP) queries, the most likely assignment to a set of randvars, LJT answers MAP queries.
Next, we look at how LVE fits into LJT.

\subsubsection{Lifted Variable Elimination}
First, we take a closer look at LVE before analysing it w.r.t.\ the requirements of LJT.
To answer a query, LVE eliminates all non-query randvars.
In the process, it computes VE for one case and exponentiates its result for isomorphic instances (lifted summing out).
Taghipour implements LVE through an operator suite (see \cite{TagFiDaBl13} for details).
\Cref{alg:lqa} shows an outline.
All operators have pre- and postconditions to ensure computing a result equivalent to one for $gr(G)$.
Its main operator \emph{sum-out} realises lifted summing out.
An operator \emph{absorb} handles evidence in a lifted way.
The remaining operators (\emph{count-convert}, \emph{split}, \emph{expand}, \emph{count-normalise}, \emph{multiply}, \emph{ground-logvar}) aim at enabling lifted summing out, transforming part of a model.

LVE as a subroutine provides lifted absorption for evidence handling.
Lifted absorption splits a parfactor into one part, for which evidence exists, and one part without evidence.
The part with evidence then absorbs the evidence by absorbing it once and exponentiating the result for all isomorphic instances.
For messages, a relaxed QA routine computes answers to parameterised queries without making all instances of query logvars explicit.
LVE answers queries for a likelihood of an event, a marginal distribution of a set of randvars, and a conditional distribution of a set of randvars given events.
LJT with LVE as a subroutine answers the same queries.
Extensions to LJT or LVE enable even more query types, such as queries for a most probable explanation or MAP \cite{BraMo18a}.

\begin{algorithm}
\caption{Outlines of Lifted QA Algorithms}
\label{alg:lqa}
\begin{algorithmic}
	\Function{LVE}{Model $G$, Query $\mathbf{Q}$, Evidence $\mathbf{E}$}
	\State Absorb $\mathbf{E}$ in $G$
	\While{$G$ has non-query PRVs}
		\If{PRV $A$ fulfils \emph{sum-out} preconditions}
			\State Eliminate $A$ using \emph{sum-out}
		\Else
			\State Apply transformator
		\EndIf
	\EndWhile
	\State \Return Multiply parfactors in $G$ \Comment $\alpha$-normalise
	\EndFunction
\vspace{-2.5mm}
\State\hspace{-3.5mm}\hrulefill
	\Procedure{FOKC}{Model $G$, Queries $\{Q_j\}_{j=1}^{m}$, Ev. $\mathbf{E}$}
		\State Reduce $G$ to WFOMC problem with $\Delta, w_T, w_F$
		\State Compile a circuit $\mathcal{C}_{e}$ for $\Delta$, $\mathbf{E}$
		\For{each query $Q_j$}
			\State Compile a circuit $\mathcal{C}_{qe}$ for $\Delta$, $Q_j$, $\mathbf{E}$
			\State Compute $P(Q_j | \mathbf{E})$ through WFOMCs in $\mathcal{C}_{qe}, \mathcal{C}_{e}$
		\EndFor
	\EndProcedure
\end{algorithmic}
\end{algorithm}

\subsubsection{First-order Knowledge Compilation}
FOKC aims at solving a WFOMC problem by building FO d-DNNF circuits given a query and evidence and computing WFOMCs on the circuits.
Of course, different compilation flavours exist, e.g., compiling into a low-level language \cite{KazPo16}.
But, we focus on the basic version of FOKC with an implementation available.
We briefly take a look at WFOMC problems, FO d-DNNF circuits, and QA with FOKC, before analysing FOKC w.r.t.\ the LJT requirements.
See \cite{BroTaMeDaRa11} for details.

Let $\Delta$ be a theory of constrained clauses and $w_T$ a positive and $w_F$ a negative weight function.
Clauses follow standard notations of (function-free) first-order logic.
A constraint expresses, e.g., an (in)equality of two logvars.
$w_T$ and $w_F$ assign weights to predicates in $\Delta$.
A \emph{WFOMC problem} consists of computing
\begin{align*}
	\sum_{I \models \Delta} \prod_{a \in I} w_T(pred(a)) \prod_{a \in HB(T) \setminus I} w_F(pred(a))
\end{align*}
where $I$ is an interpretation of $\Delta$ that satisfies $\Delta$, $HB(T)$ is the Herbrand base and $pred$ maps atoms to their predicate.
See \cite{Bro13} for a description of how to transform parfactor models into WFOMC problems.


FOKC converts $\Delta$ to be in FO d-DNNF, where all conjunctions are decomposable (all pairs of conjuncts independent) and all disjunctions are deterministic (only one disjunct true at a time).
The normal form allows for efficient reasoning as computing the probability of a conjunction decomposes into a product of the probabilities of its conjuncts and computing the probability of a disjunction follows from the sum of probabilities of its disjuncts.
An \emph{FO d-DNNF circuit} represents such a theory as a directed acyclic graph.
Inner nodes are labelled with $\vee$ and $\wedge$.
Additionally, set-disjunction and set-conjunction represent isomorphic parts in $\Delta$.
Leaf nodes contain atoms from $\Delta$.
The process of forming a circuit is called compilation.

Now, we look at how FOKC answers queries.
\Cref{alg:lqa} shows an outline with input model $G$, a set of query randvars $\{Q_i\}_{i=1}^m$, and evidence $\mathbf{E}$.
FOKC starts with transforming $G$ into a WFOMC problem $\Delta$ with weight functions $w_T$ and $w_F$.
It compiles a circuit $\mathcal{C}_e$ for $\Delta$ including $\mathbf{E}$.
For each query $Q_i$, FOKC compiles a circuit $\mathcal{C}_{qe}$ for $\Delta$ including $\mathbf{E}$ and $Q_i$.
It then computes 
\begin{align}
	P(Q_i | \mathbf{E}) = \frac{WFOMC(\mathcal{C}_{qe}, w_T, w_F)}{WFOMC(\mathcal{C}_{e}, w_T, w_F)}\label{eq:wfomc}
\end{align}
by propagating WFOMCs in $\mathcal{C}_{qe}$ and $\mathcal{C}_{e}$ based on $w_T$ and $w_F$.
FOKC can reuse the denominator WFOMC for all $Q_i$.

Regarding the potential of FOKC as a subroutine for LJT, FOKC does not fulfil all requirements.
FOKC can handle evidence through conditioning \cite{BroDa12}.
But, a lifted message passing is not possible in a domain-lifted and exact way without restrictions.
FOKC answers queries for a likelihood of an event, a marginal distribution of a single randvar, and a conditional distribution for a single randvar given events.
Inherently, conjunctive queries are only possible if the conjuncts are probabilistically independent \cite{DarMa02}, which is rarely the case for separators.
Otherwise, FOKC has to invest more effort to take into account that the probabilities overlap.
Thus, the restricted query language means that LJT cannot use FOKC for message calculations in general.
Given an FO jtree with singleton separators, message passing with FOKC as a subroutine may be possible.
FOKC as such takes ground queries as input or computes answers for random groundings, so FOKC for message passing needs an extension to handle parameterised queries.
FOKC may not fulfil all requirements, but we may combine LJT, LVE, and FOKC into one algorithm to answer queries for models where LJT with LVE as a subroutine struggles.

\section{Fusing LJT, LVE, and FOKC}
We now use LJT as a backbone and LVE and FOKC as subroutines, fusing all three algorithms.
\Cref{alg:fokcljt} shows an outline of the fused algorithm named LJTKC.
Inputs are a model $G$, a set of queries $\{Q_j\}_{j=1}^m$, and evidence $\mathbf{E}$.
Each query $Q_j$ has a single query term in contrast to a set of randvars $\mathbf{Q}_j$ in LVE and LJT.
The change stems from FOKC to ensure a correct result.
Thus, LJTKC has the same expressiveness regarding the query language as FOKC.

The first three steps of LJTKC coincide with LJT as specified in \cref{alg:lqa}:
LJTKC builds an FO jtree $J$ for $G$, enters $\mathbf{E}$ into $J$, and passes messages in $J$ using LVE for message calculations.
During evidence entering, each local model covering evidence randvars absorbs evidence.
LJTKC calculates messages based on local models with absorbed evidence, spreading the evidence information along with other local information.
After message passing, each parcluster $\mathbf{C}_i$ contains in its local model and received messages all information from $G$ and $\mathbf{E}$.
This information is sufficient to answer queries for randvars contained in $\mathbf{C}_i$ and remains valid as long as $G$ and $\mathbf{E}$ do not change.
At this point, FOKC starts to interleave with the original LJT procedure.

LJTKC continues its preprocessing.
For each parcluster $\mathbf{C}_i$, LJTKC extracts a submodel $G'$ of local model $G_i$ and all messages received and reduces $G'$ to a WFOMC problem with theory $\Delta_i$ and weight functions $w^i_F, w^i_T$.
It does not need to incorporate $\mathbf{E}$ as the information from $\mathbf{E}$ is contained in $G'$ through evidence entering and message passing.
LJTKC compiles an FO d-DNNF circuit $\mathcal{C}_i$ for $\Delta_i$ and computes a WFOMC $c_i$ on $\mathcal{C}_i$.
In precomputing a WFOMC $c_i$ for each parcluster, LJTKC uses that the denominator of \cref{eq:wfomc} is identical for varying queries on the same model and evidence.
For each query handled at $\mathbf{C}_i$, the submodel consists of $G'$, resulting in the same circuit $\mathcal{C}_i$ and WFOMC $c_i$.

\begin{algorithm}
\caption{Outline of LJTKC}
\label{alg:fokcljt}
\begin{algorithmic}
	\Procedure{LJTKC}{Model $G$, Queries $\{Q_j\}_{j=1}^{m}$, Evidence $\mathbf{E}$}
		\State Construct FO jtree $J$ for $G$
		\State Enter $\mathbf{E}$ into $J$
		\State Pass messages on $J$ \Comment LVE as subroutine
		\For{each parcluster $\mathbf{C}_i$ of $J$ with local model $G_i$}
			\State Form submodel $G' \gets G_i \cup \bigcup_{j \in nbs(i)} m_{ij}$
			\State Reduce $G'$ to WFOMC problem with $\Delta_i, w^i_T, w^i_F$
			\State Compile a circuit $\mathcal{C}_i$ for $\Delta_i$
			\State Compute $c_i = WFOMC(\mathcal{C}_i, w^i_T, w^i_F)$
		\EndFor
		\For{each query ${Q}_j$}
			\State Find parcluster $\mathbf{C}_i$ where $Q_j \in \mathbf{C}_i$
			\State Compile a circuit $\mathcal{C}_{q}$ for $\Delta_i$, $Q_j$
			\State Compute $c_q = WFOMC(\mathcal{C}_q, w^i_T, w^i_F)$
			\State Compute $P(Q_j | \mathbf{E}) = {c_q}/{c_i}$
		\EndFor
	\EndProcedure
\end{algorithmic}
\end{algorithm}

To answer a query $Q_j$, LJTKC finds a parcluster $\mathbf{C}_i$ that covers $Q_j$ and compiles an FO d-DNNF circuit $\mathcal{C}_q$ for $\Delta_i$ and $Q_j$.
It computes a WFOMC $c_q$ in $\mathcal{C}_q$ and determines an answer to $P(Q_j | \mathbf{E})$ by dividing the just computed WFOMC $c_q$ by the precomputed WFOMC $c_i$ of this parcluster.
LJTKC reuses $\Delta_i$, $w^i_T$, and $w^i_F$ from preprocessing.

\subsubsection{Example Run}
For $G_{ex}$, LJTKC builds an FO jtree as depicted in \cref{fig:fojt}.
Without evidence, message passing commences.
LJTKC sends messages from parclusters $\mathbf{C}_1$ and $\mathbf{C}_3$ to parcluster $\mathbf{C}_2$ and back.
For message $m_{12}$ from $\mathbf{C}_1$ to $\mathbf{C}_2$, LJTKC eliminates $Asthma(X)$ from $G_1$ using LVE.
For message $m_{32}$ from $\mathbf{C}_3$ to $\mathbf{C}_2$, LJTKC eliminates $Cancer(X)$ from $G_3$ using LVE.
For the messages back, LJTKC eliminates $Friends(X, Y)$ each time, for message $m_{21}$ to $\mathbf{C}_1$ from $G_2 \cup m_{32}$ and for message $m_{23}$ to $\mathbf{C}_3$ from $G_2 \cup m_{12}$.
Each parcluster holds all model information encoded in its local model and received messages, which form the submodels for the compilation steps.
At $\mathbf{C}_1$, the submodel contains $G_1 = \{g_4\}$ and $m_{21}$.
At $\mathbf{C}_2$, the submodel contains $G_2 = \{g_0, g_1, g_2\}$, $m_{12}$, and $m_{32}$.
At $\mathbf{C}_3$, the submodel contains $G_3 = \{g_3, g_5\}$ and $m_{23}$.

For each parcluster, LJTKC reduces the submodel to a WFOMC problem, compiles a circuit for the problem specification, and computes a parcluster WFOMC.
Given, e.g., query randvar $Cancer(eve)$, LJTKC takes a parcluster that contains the query randvar, here $\mathbf{C}_3$.
It compiles a circuit for the query and $\Delta_3$, computes a query WFOMC $c_q$, and divides $c_q$ by $c_3$ to determine $P(cancer(eve))$.
Next, we argue why QA with LJTKC is sound.

\begin{theorem}
	LJTKC is sound, i.e., computes a correct result for a query $Q$ given a model $G$ and evidence $\mathbf{E}$.
\end{theorem}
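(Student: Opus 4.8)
The plan is to establish soundness compositionally, by reducing the correctness of LJTKC to two facts already available in the excerpt: the correctness of LJT's message-passing phase (which uses LVE as a subroutine) and the correctness of FOKC's query-answering formula, \cref{eq:wfomc}. The key structural observation is that LJTKC and LJT share their first three steps verbatim, and FOKC enters only after message passing is complete. So I would split the argument at exactly that seam.

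First I would argue that after message passing, every parcluster $\mathbf{C}_i$ holds, in its local model $G_i$ together with the incoming messages $\{m_{ij}\}_{j \in nbs(i)}$, all the information of $G$ and $\mathbf{E}$ relevant to the PRVs in $\mathbf{C}_i$. This is the defining correctness property of LJT message passing and follows from the three FO jtree properties (coverage, family-preservation, and the running-intersection property) together with the fact that LVE computes lifted messages equivalent to grounding. Concretely, I would invoke that the submodel $G' = G_i \cup \bigcup_{j \in nbs(i)} m_{ij}$ represents the same marginal over the randvars in $\mathbf{C}_i$ as the full model $G$ with evidence $\mathbf{E}$ absorbed — i.e. $\sum_{rv(G')\setminus \mathbf{C}_i}\prod_{f\in gr(G')} f \ \propto\ \sum_{rv(G)\setminus \mathbf{C}_i}\prod_{f\in gr(G\cup\mathbf{E})} f$. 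This is precisely what justifies answering any query on $Q_j \in \mathbf{C}_i$ using only $G'$ rather than all of $G$.

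Second, given this, I would show that answering $Q_j$ from $G'$ via FOKC is correct. The reduction of $G'$ to a WFOMC problem $(\Delta_i, w^i_T, w^i_F)$ is sound by the cited reduction \cite{Bro13}, so WFOMCs on circuits compiled from $\Delta_i$ compute the corresponding weighted sums over interpretations. Then $c_q = WFOMC(\mathcal{C}_q)$ and $c_i = WFOMC(\mathcal{C}_i)$ are the numerator and denominator of \cref{eq:wfomc} for the submodel $G'$, so their ratio equals $P(Q_j \mid \mathbf{E})$ computed on $G'$. The one point needing care is that LJTKC omits re-entering $\mathbf{E}$ when forming $\Delta_i$; I would note that this is correct precisely because the evidence information is already baked into $G'$ through absorption during evidence entering and through message passing, so $\mathbf{E}$ would be redundant. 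Combining the two steps — $G'$ carries the correct marginal, and FOKC computes $P(Q_j\mid\mathbf{E})$ correctly from $G'$ — yields soundness.

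The main obstacle I expect is the denominator-reuse argument: LJTKC precomputes a single $c_i$ per parcluster and reuses it across all queries mapped to $\mathbf{C}_i$, and one must verify this reuse is valid. The justification is that the denominator of \cref{eq:wfomc} depends only on $\Delta_i$ (hence on $G'$ and $\mathbf{E}$), not on $Q_j$; since all queries handled at $\mathbf{C}_i$ share the same submodel $G'$, they share the same circuit $\mathcal{C}_i$ and the same WFOMC $c_i$. I would make explicit that each query is single-term (as stipulated in the setup), so FOKC's restriction to non-conjunctive queries is respected and the query-language caveat about separators never arises — each $Q_j$ is compiled as a ground or parameterised single-randvar query, exactly the case FOKC handles soundly. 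The remaining details are routine bookkeeping that I would leave to the reader, marking this a proof sketch in keeping with the paper's convention.
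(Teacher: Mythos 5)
Your proposal is correct and follows essentially the same route as the paper's own proof sketch: reduce soundness to the assumed correctness of LJT's jtree construction and LVE-based message passing (so that each parcluster's local model plus messages carries all information from $G$ and $\mathbf{E}$), and then to the correctness of FOKC on the resulting submodel for single-term queries. The extra details you supply --- the marginal-equivalence property of $G'$, the redundancy of re-entering $\mathbf{E}$, and the validity of reusing $c_i$ across queries --- are elaborations of the same argument rather than a different approach.
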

\begin{proof}
	We assume that LJT is correct, yielding an FO jtree $J$ for model $G$, which means, $J$ fulfils the three junction tree properties, which allows for local computations based on \cite{SheSh90}.
	Further, we assume that LVE is correct, ensuring correct computations for evidence entering and message passing, and that FOKC is correct, computing correct answers for single term queries.
	
	LJTKC starts with the first three steps of LJT.
	It constructs an FO jtree for $G$, allowing for local computations.
	Then, LJTKC enters $\mathbf{E}$ and calculates messages using LVE, which produces correct results given LVE is correct. 
	After message passing, each parcluster holds all information from $G$ and $\mathbf{E}$ in its local model and received messages, which allows for answering queries for randvars that the parcluster contains.
	At this point, the FOKC part takes over, taking all information present at a parcluster and compiling a circuit and computing a WFOMC, which produces correct results given FOKC is correct.
	The same holds for the compilation and computations done for query $Q$.
	Thus, LJTKC computes a correct result for $Q$ given $G$ and $\mathbf{E}$.
\end{proof}

\begin{figure*}
\begin{minipage}{.48\textwidth}
\centering
	\includegraphics[width=\textwidth]{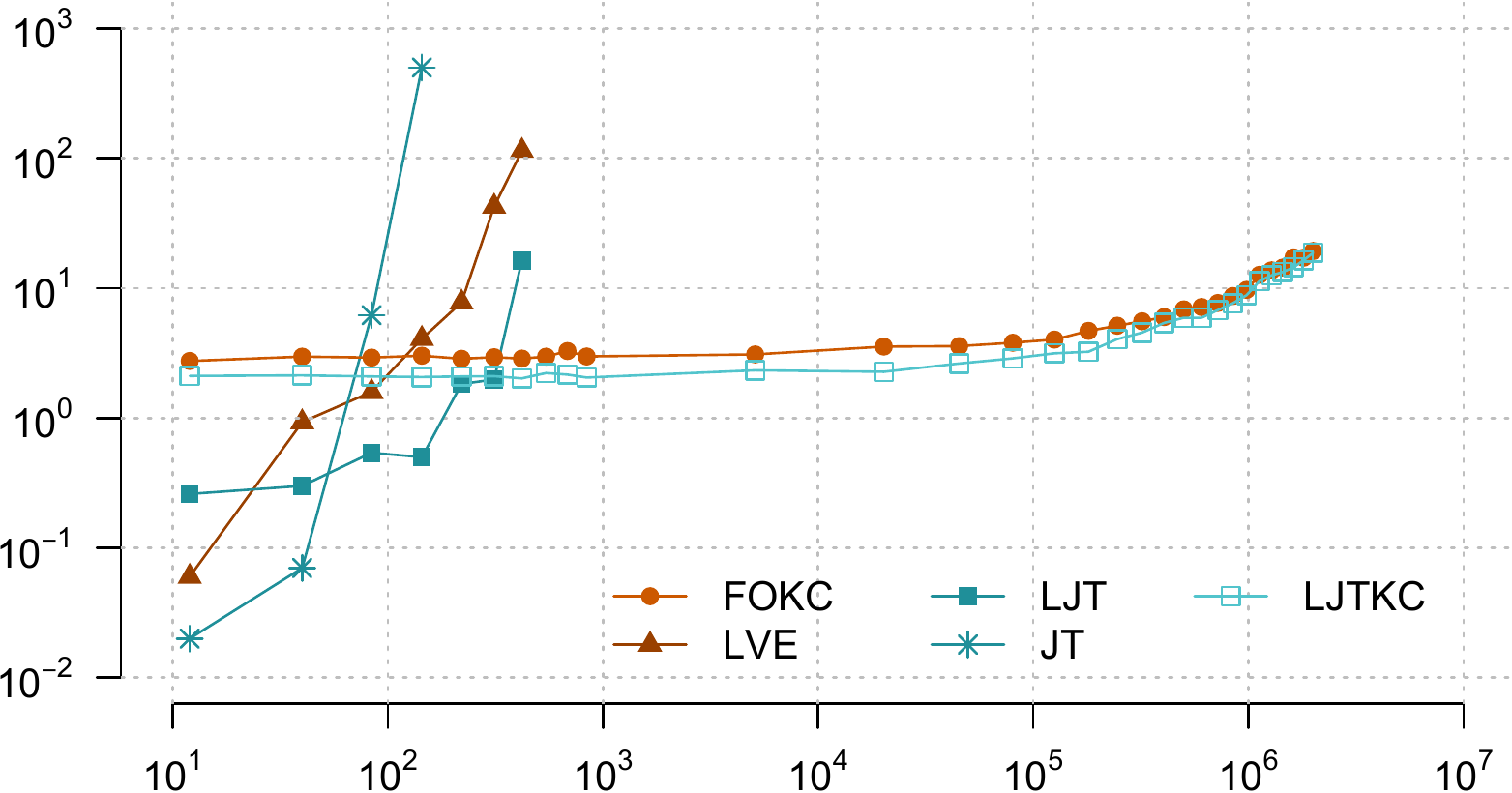}
\caption{Runtimes [ms] for $G_{ex}$; on x-axis: $|gr(G_{ex})|$ from $12$ to $2{,}002{,}000$; both axes on log scale; points connected for readability}
\label{fig:evS}
\end{minipage}\hfill
\begin{minipage}{.48\textwidth}
\centering
	\includegraphics[width=\textwidth]{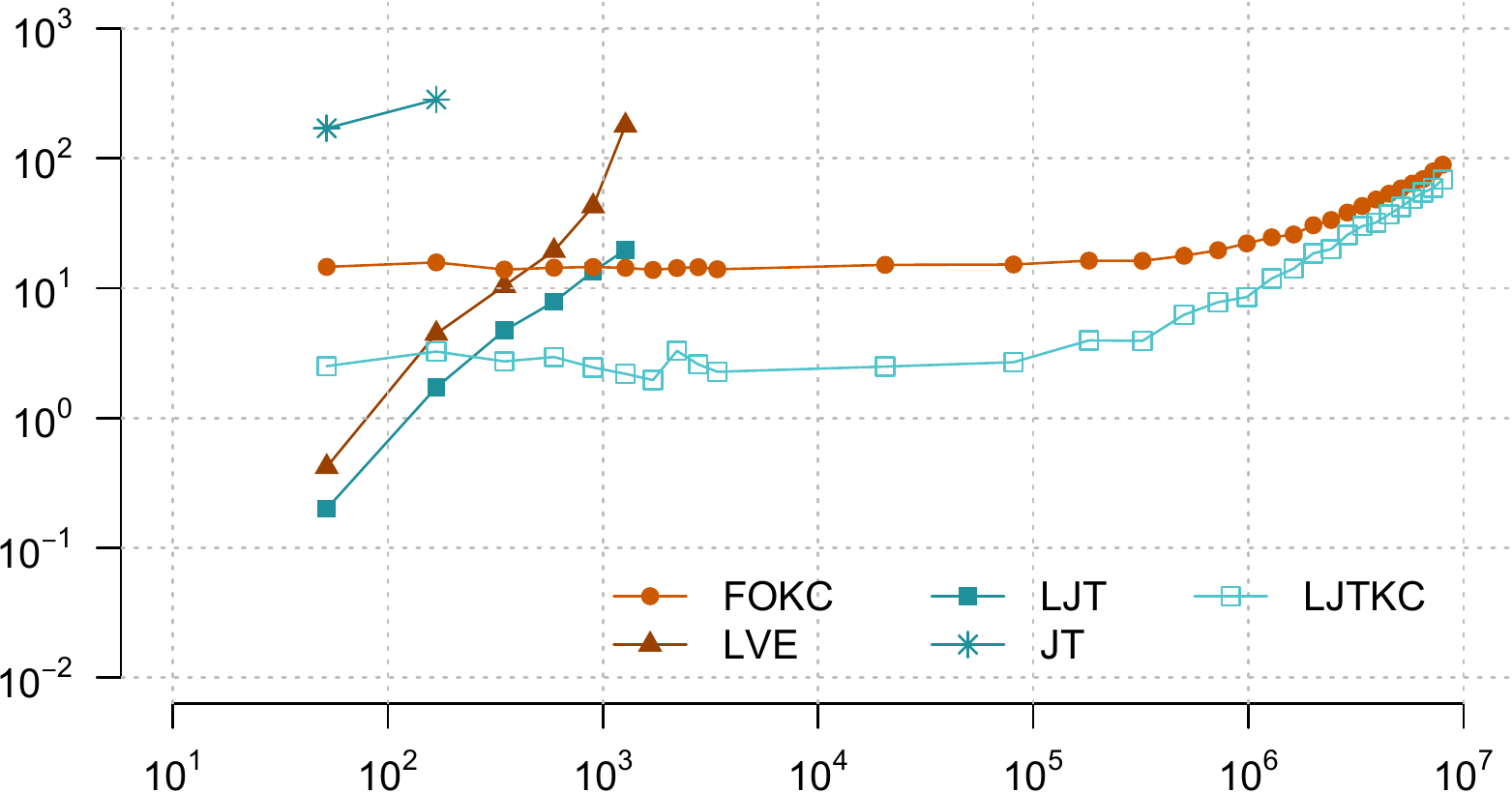}
\caption{Runtimes [ms] for $G_l$; on x-axis: $|gr(G_l)|$ from $52$ to $8{,}010{,}000$; both axes on log scale; points connected for readability}
\label{fig:evL}
\end{minipage}
\end{figure*}

\subsubsection{Theoretical Discussion}
We discuss space and runtime performance of LJT, LVE, FOKC, and LJTKC in comparison with each other.

LJT requires \emph{space} for its FO jtree as well as storing the messages at each parcluster, while FOKC takes up space for storing its circuits.
As a combination of LJT and FOKC, LJTKC stores the preprocessing information produced by both LJT and FOKC.
Next to the FO jtree structure and messages, LJTKC stores a WFOMC problem specification and a circuit for each parcluster.
Since the implementation of LVE for the $X \not= Y$ cases causes LVE (and LJT) to ground, the space requirements during QA are increasing with rising domain sizes.
Since LJTKC avoids the groundings using FOKC, the space requirements during QA are smaller than for LJT alone.
W.r.t.\ circuits, LJTKC stores more circuits than FOKC but the individual circuits are smaller and do not require conditioning, which leads to a significant blow-up for the circuits.

LJTKC accomplishes speeding up QA for certain challenging inputs by fusing LJT, LVE, and FOKC.
The new algorithm has a faster runtime than LJT, LVE, and FOKC as it is able to precompute reusable parts and provide smaller models for answering a specific query through the underlying FO jtree with its messages and parcluster compilation.
In comparison with FOKC, LJTKC speeds up runtimes as answering queries works with smaller models.
In comparison with LJT and LVE, LJTKC is faster when avoiding groundings in LVE.
Instead of precompiling each parcluster, which adds to its overhead before starting with answering queries, LJTKC could compile on demand.
On-demand compilation means less runtime and space required in advance but more time per initial query at a parcluster.
One could further optimise LJTKC by speeding up internal computations in LVE or FOKC (e.g., caching for message calculations or pruning circuits using context-specific information)

In terms of \emph{complexity}, LVE and FOKC have a time complexity linear in terms of the domain sizes of the model logvars for models that allow for a lifted solution.
LJT with LVE as a subroutine also has a time complexity linear in terms of the domain sizes for query answering.
For message passing, a factor of $n$, which is the number of parclusters, multiplies into the complexity, which basically is the same time complexity as answering a single query with LVE.
LJTKC has the same time complexity as LJT for message passing since the algorithms coincide.
For query answering, the complexity is determined by the FOKC complexity, which is linear in terms of domain sizes.
Therefore, LJTKC has a time complexity linear in terms of the domain sizes.
Even though, the original LVE and LJT implementations show a practical problem in translating the theory into an efficient program, the worst case complexity for liftable models is linear in terms of domain sizes.

The next section presents an empirical evaluation, showing how LJTKC speeds up QA compared to FOKC and LJT for challenging inputs.

\begin{figure*}
\begin{minipage}{.48\textwidth}
\centering
	\includegraphics[width=\columnwidth]{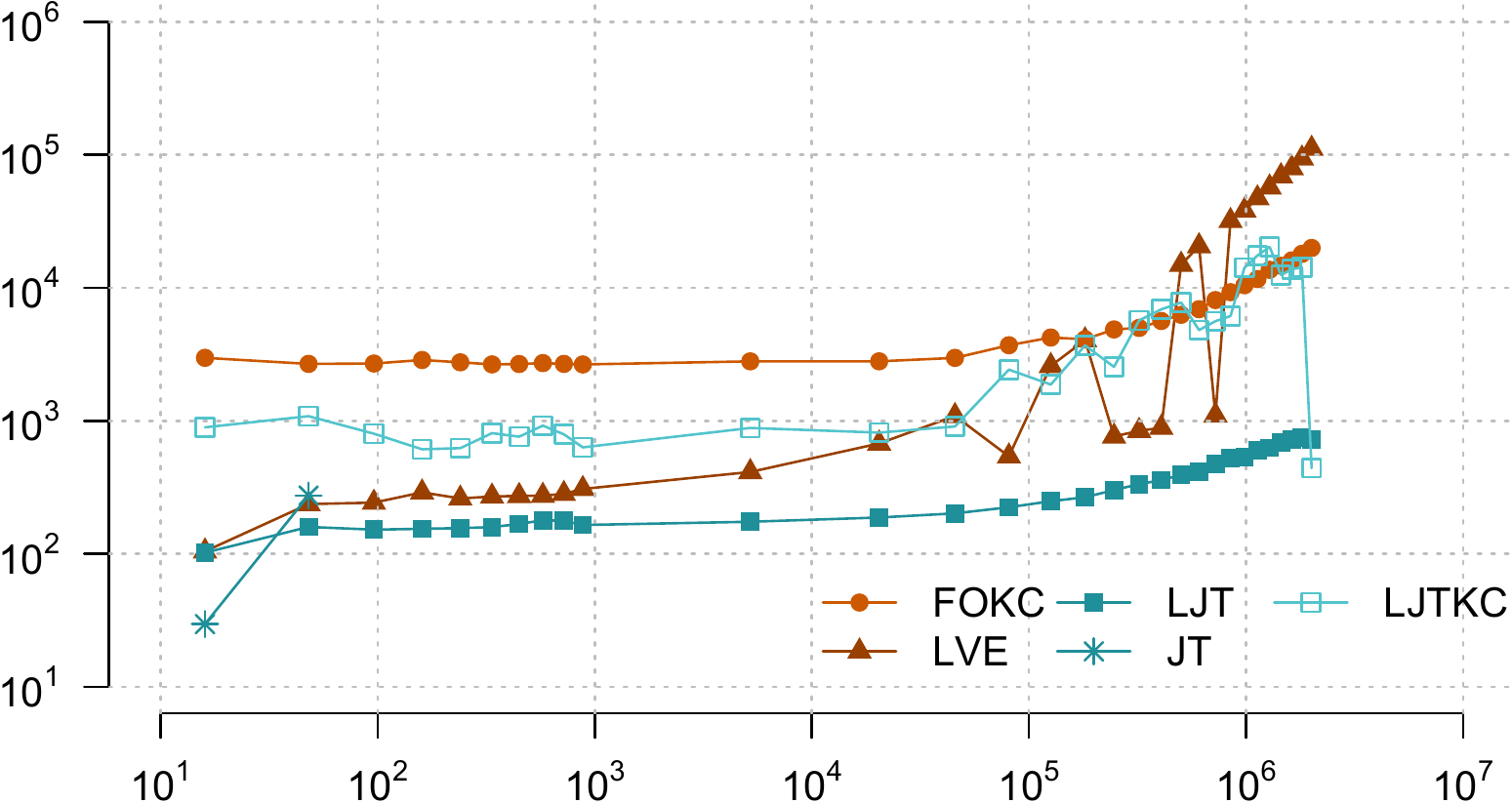}
\caption{Runtimes [ms] for $G_{ex}'$; on x-axis: $|gr(G_{ex}')|$ from $16$ to $2{,}004{,}000$; both axes on log scale; points connected for readability}
\label{fig:evSeq}
\end{minipage}\hfill
\begin{minipage}{.48\textwidth}
\centering
	\includegraphics[width=\textwidth]{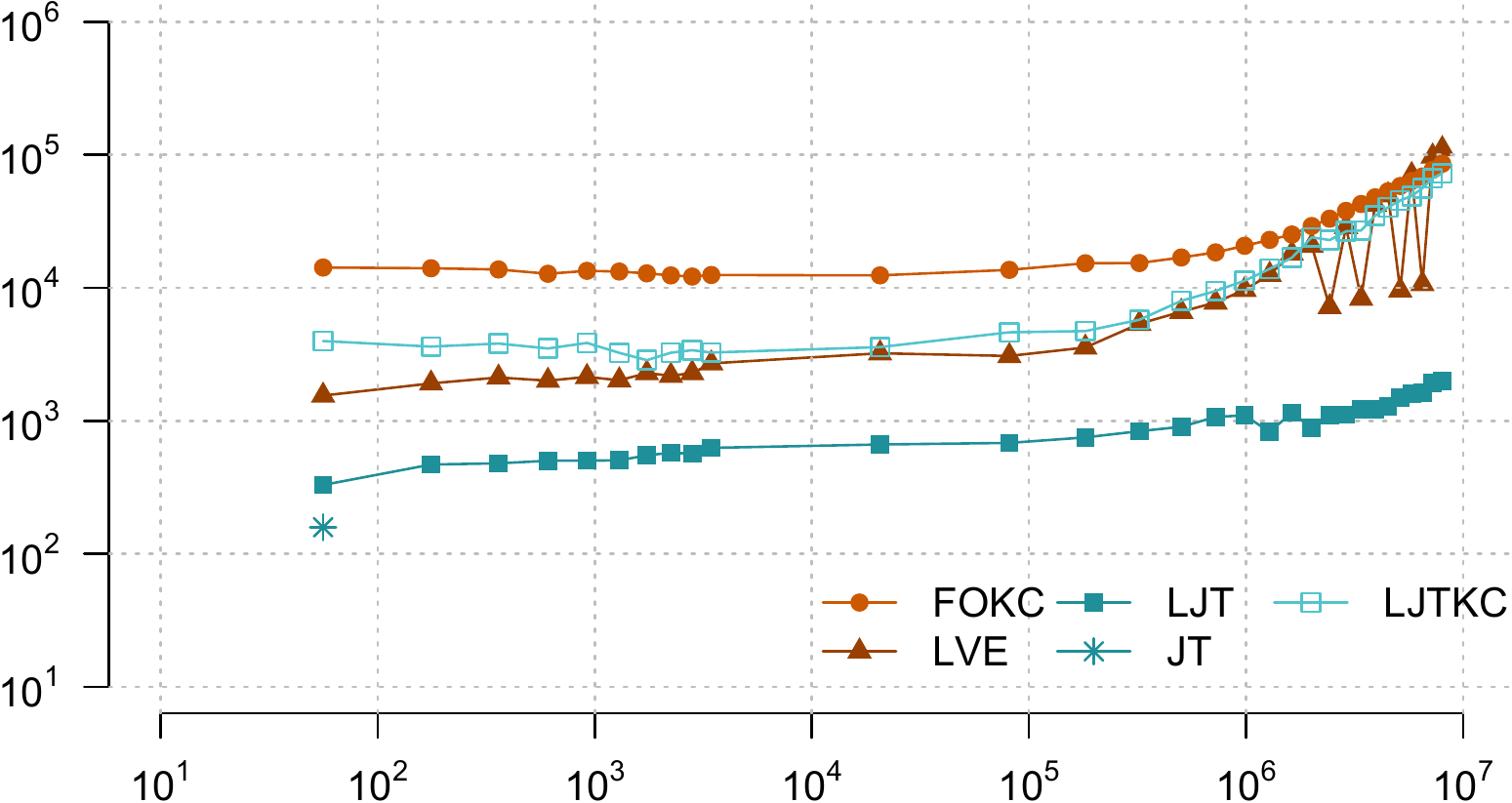}
\caption{Runtimes [ms] for $G_l'$; on x-axis: $|gr(G_l')|$ from $56$ to $8{,}012{,}000$; both axes on log scale; points connected for readability}
\label{fig:evLeq}
\end{minipage}
\end{figure*}

\section{Empirical Evaluation}
This evaluation demonstrates the speed up we can achieve for certain inputs when using LJT and FOKC in conjunction.
We have implemented a prototype of LJT, named \texttt{ljt} here.
Taghipour provides an implementation of LVE including its operators (available at \url{https://dtai.cs.kuleuven.be/software/gcfove}), named \texttt{lve}.
\cite{Bro13} provides an implementation of FOKC (available at \url{https://dtai.cs.kuleuven.be/software/wfomc}), named \texttt{fokc}.
For this paper, we integrated \texttt{fokc} into \texttt{ljt} to compute marginals at parclusters, named \texttt{ljtkc}.
Unfortunately, the FOKC implementation does not handle evidence in a lifted manner as described in \cite{BroDa12}.
Therefore, we do not consider evidence as \texttt{fokc} runtimes explode.
We have also implemented the propositional junction tree algorithm, named \texttt{jt}.

This evaluation has two parts:
First, we test two input models with inequalities to highlight
\begin{inparaenum}[(i)]
	\item how runtimes of LVE and, subsequently, LJT explode,
	\item how FOKC handles the inputs without the blowup in runtime, and
	\item how LJTKC provides a speedup for those inputs.
\end{inparaenum}
Second, we test two inputs without inequalities to highlight
\begin{inparaenum}[(i)]
	\item how runtimes of LVE and LJT compare to FOKC without inequalities and
	\item how LJT enables a fast and stable reasoning.
\end{inparaenum}
We compare overall runtimes without input parsing averaged over five runs with a working memory of 16GB.
\texttt{lve} eliminates all non-query randvars from its input model for each query, grounding in the process.
\texttt{ljt} builds an FO jtree for its input model, passes messages, and then answers queries on submodels.
\texttt{fokc} forms a WFOMC problem for its input model, compiles a model circuit, compiles for each query a query circuit, and computes the marginals of all PRVs in the input model with random groundings.
\texttt{ljtkc} starts like \texttt{ljt} for its input model until answering queries.
It then calls \texttt{fokc} at each parcluster to compute marginals of parcluster PRVs with random groundings.
\texttt{jt} receives the grounded input models and otherwise proceeds like \texttt{ljt}.

\subsubsection{Inputs with Inequalities}
For the first part of this evaluation, we test two input models, $G_{ex}$ and a slightly larger model $G_l$ that is an extension of $G_{ex}$.
$G'$ has two more logvars, each with its own domain, and eight additional PRVs with one or two parameters.
The PRVs are arguments to twenty parfactors, each parfactor with one to three inputs.
The FO jtree for $G_l$ has six parclusters, the largest one containing five PRVs.
We vary the domain sizes from $2$ to $1000$, resulting in $|gr(G_{ex})|$ from $12$ to $2{,}002{,}000$ and $|gr(G_l)|$ from $52$ to $8{,}010{,}000$.
We query each PRV with random groundings, leading to $4$ and $12$ queries, respectively.
For $G_{ex}$, the queries could be
\begin{itemize}
	\item $Smokes(p_1)$,
	\item $Friends(p_1, p_2)$,
	\item $Asthma(p_1)$, and
	\item $Cancer(p_1)$,
\end{itemize}
where $p_i$ stands for a domain value of $X$ and $Y$.
\Cref{fig:evS,fig:evL} show for $G_{ex}$ and $G_l$ respectively runtimes in milliseconds [ms] with increasing $|gr(G)|$ on log-scaled axes, marked as follows:
\begin{itemize}
	\item \texttt{fokc}: circle, orange,
	\item \texttt{jt}: star, turquoise,
	\item \texttt{ljt}: filled square, turquoise,
	\item \texttt{ljtkc}: hollow square, light turquoise, and
	\item \texttt{lve}: triangle, dark orange.
\end{itemize}

In \cref{fig:evS}, we compare runtimes on the smaller model, $G_{ex}$, with four queries.
For the first two settings, \texttt{jt} is the fastest with a runtime of under $20$ms, while \texttt{fokc} is the slowest with over $2.700$ms.
After the fourth setting, the \texttt{jt} runtime explodes even more and memory errors occur.
\texttt{lve} and \texttt{ljt} have shorter runtimes than \texttt{fokc} and \texttt{ljtkc} for the first three settings as well, with \texttt{ljt} being faster than \texttt{lve} due to the smaller submodels for QA.
But, runtimes of \texttt{lve} and \texttt{ljt} steadily increase as the groundings become more severe with larger domain sizes.
With the seventh setting, both programs have memory errors.
\texttt{fokc} and \texttt{ljtkc} show runtimes that increase linearly with domain sizes.
Given this small model, \texttt{ljtkc} has minimally faster runtimes than \texttt{fokc}.

For the larger model, $G_l$, the runtime behaviour is similar as shown in \cref{fig:evL}.
Due to the larger model, the \texttt{jt} runtimes are already much longer with the first setting than the other runtimes.
Again, up to the third setting, \texttt{lve} and \texttt{ljt} perform better than \texttt{fokc} with \texttt{ljt} being faster than \texttt{lve} and from the seventh setting on, memory errors occur.
\texttt{ljtkc} performs best from the third setting onwards.
\texttt{ljtkc} and \texttt{fokc} show the same steady increase in runtimes as before.
\texttt{ljtkc} runtimes have a speedup of a factor from $0.13$ to $0.76$ for $G_l$ compared to \texttt{fokc}.
Up to a domain size of $100$ ($|gr(G_l)| = 81{,}000$), \texttt{ljtkc} saves around one order of magnitude.

For small domain sizes, \texttt{ljtkc} and \texttt{fokc} perform worst.
With increasing domain sizes, they outperform the other programs.
While not a part of this evaluation, experiments showed that with an increasing number of parfactors, \texttt{ljtkc} promises to outperform \texttt{fokc} even more, especially with smaller domain sizes (for our setups, $6$ to $500$).

\subsubsection{Inputs without Inequalities}
For the second part of this evaluation, we test two input models, $G_{ex}'$ and $G_l'$, that are both the models from the first part but with $Y$ receiving an own domain as large as $X$, making the inequality superfluous.
Domain sizes vary from $2$ to $1000$, resulting in $|gr(G_{ex}')|$ from $16$ to $2{,}004{,}000$ and $|gr(G_l')|$ from $56$ to $8{,}012{,}000$.
Each PRV is a query with random groundings again (without a $Y$ grounding).
\Cref{fig:evSeq,fig:evLeq} show for $G_{ex}'$ and $G_l'$ respectively runtimes in milliseconds [ms] with increasing $|gr(G)|$, marked as before.
Both axes are log-scaled.
Points are connected for readability.

\Cref{fig:evSeq,fig:evLeq} show that \texttt{lve} and \texttt{ljt} do not exhibit the runtime explosion without inequalities.
\texttt{ljtkc} does not perform best as the overhead introduced by FOKC does not pay off as much.
In fact, \texttt{ljt} performs best in almost all cases.
In both figures, \texttt{jt} is the fastest for the first setting.
With the following settings, \texttt{jt} runs into memory problems while runtimes explode.
\texttt{lve} has a steadily increasing runtime for most parts, though a few settings lead to shorter runtimes with higher domain sizes.
We could not find an explanation for the decrease in runtime for those handful of settings.
Overall, \texttt{lve} runtimes rise more than the other runtimes apart from \texttt{jt}.
\texttt{ljtkc} exhibits an unsteady runtime performance on the smaller model, though again, we could not find an explanation for the jumps between various sizes.
With the larger model, \texttt{ljtkc} shows a more steady performance that is better than the one of \texttt{fokc}.
\texttt{ljtkc} is a factor of $0.2$ to $0.8$ faster.
\texttt{fokc} and \texttt{ljt} runtimes steadily increase with rising $|gr(G)|$.
\texttt{ljt} gains over an order of magnitude compared to \texttt{fokc}.
In the larger model, \texttt{ljt} is a factor of $0.02$ to $0.06$ than \texttt{fokc} over all domain sizes.

In summary, without inequalities \texttt{ljt} performs best on our input models, being faster by over an order of magnitude compared to \texttt{fokc}.
Though, \texttt{ljtkc} does not perform worst, \texttt{ljt} performs better and steadier.
With inequalities, \texttt{ljtkc} shows promise in speeding up performance.

\section{Conclusion}
We present a combination of FOKC and LJT to speed up inference.
For certain inputs, LJT (with LVE as a subroutine) and FOKC start to struggle either due to model structure or size.
LJT provides a means to cluster a model into submodels, on which any exact lifted inference algorithm can answer queries given the algorithm can handle evidence and messages in a lifted way.
FOKC fused with LJT and LVE can handle larger models more easily.
In turn, FOKC boosts LJT by avoiding groundings in certain cases.
The fused algorithm enables us to compute answers faster than LJT with LVE for certain inputs and LVE and FOKC alone.

We currently work on incorporating FOKC into message passing for cases where an problematic elimination occurs during message calculation, which includes adapting an FO jtree accordingly.
We also work on learning lifted models to use as inputs for LJT.
Moreover, we look into constraint handling, possibly realising it with answer-set programming.
Other interesting algorithm features include parallelisation and caching as a means to speed up runtime.

\bibliographystyle{aaai}
\bibliography{lifted_inf}

\begin{thebibliography}{}

\bibitem[\protect\citeauthoryear{Ahmadi \bgroup et al\mbox.\egroup
  }{2013}]{AhmKeMlNa13}
Ahmadi, B.; Kersting, K.; Mladenov, M.; and Natarajan, S.
\newblock 2013.
\newblock {Exploiting Symmetries for Scaling Loopy Belief Propagation and
  Relational Training}.
\newblock {\em Machine Learning} 92(1):91--132.

\bibitem[\protect\citeauthoryear{Apsel and Brafman}{2011}]{ApsBr11}
Apsel, U., and Brafman, R.~I.
\newblock 2011.
\newblock {Extended Lifted Inference with Joint Formulas}.
\newblock In {\em UAI-11 Proceedings of the 27th Conference on Uncertainty in
  Artificial Intelligence}.

\bibitem[\protect\citeauthoryear{Bellodi \bgroup et al\mbox.\egroup
  }{2014}]{BelLaRiCoZe14}
Bellodi, E.; Lamma, E.; Riguzzi, F.; Costa, V.~S.; and Zese, R.
\newblock 2014.
\newblock {Lifted Variable Elimination for Probabilistic Logic Programming}.
\newblock {\em Theory and Practice of Logic Programming} 14(4--5):681--695.

\bibitem[\protect\citeauthoryear{Braun and M\"oller}{2016}]{BraMo16a}
Braun, T., and M\"oller, R.
\newblock 2016.
\newblock {Lifted Junction Tree Algorithm}.
\newblock In {\em Proceedings of {KI} 2016: Advances in Artificial
  Intelligence},  30--42.
\newblock Springer.

\bibitem[\protect\citeauthoryear{Braun and M\"oller}{2018}]{BraMo18a}
Braun, T., and M\"oller, R.
\newblock 2018.
\newblock {Lifted Most Probable Explanation}.
\newblock In {\em Proceedings of the International Conference on Conceptual
  Structures},  39--54.
\newblock Springer.

\bibitem[\protect\citeauthoryear{Chavira and Darwiche}{2007}]{ChaDa07}
Chavira, M., and Darwiche, A.
\newblock 2007.
\newblock {Compiling Bayesian Networks Using Variable Elimination}.
\newblock In {\em IJCAI-07 Proceedings of the 20th International Joint
  Conference on Artificial Intelligence},  2443--2449.

\bibitem[\protect\citeauthoryear{Chavira and Darwiche}{2008}]{ChaDa08}
Chavira, M., and Darwiche, A.
\newblock 2008.
\newblock {On Probabilistic Inference by Weighted Model Counting}.
\newblock {\em Artificial Intelligence} 172(6-7):772--799.

\bibitem[\protect\citeauthoryear{Choi, Amir, and Hill}{2010}]{ChoiAmHi12}
Choi, J.; Amir, E.; and Hill, D.~J.
\newblock 2010.
\newblock {Lifted Inference for Relational Continuous Models}.
\newblock In {\em UAI-10 Proceedings of the 26th Conference on Uncertainty in
  Artificial Intelligence},  13--18.

\bibitem[\protect\citeauthoryear{Darwiche and Marquis}{2002}]{DarMa02}
Darwiche, A., and Marquis, P.
\newblock 2002.
\newblock {A Knowledge Compilation Map}.
\newblock {\em Journal of Artificial Intelligence Research} 17(1):229--264.

\bibitem[\protect\citeauthoryear{Das \bgroup et al\mbox.\egroup
  }{2016}]{DaWuKhKeNa16}
Das, M.; Wu, Y.; Khot, T.; Kersting, K.; and Natarajan, S.
\newblock 2016.
\newblock {Scaling Lifted Probabilistic Inference and Learning Via Graph
  Databases}.
\newblock In {\em Proceedings of the SIAM International Conference on Data
  Mining},  738--746.

\bibitem[\protect\citeauthoryear{de Salvo~Braz}{2007}]{Braz07}
de~Salvo~Braz, R.
\newblock 2007.
\newblock {\em {Lifted First-order Probabilistic Inference}}.
\newblock Ph.D. Dissertation, University of Illinois at Urbana Champaign.

\bibitem[\protect\citeauthoryear{Gogate and Domingos}{2010}]{GogDo10}
Gogate, V., and Domingos, P.
\newblock 2010.
\newblock {Exploiting Logical Structure in Lifted Probabilistic Inference}.
\newblock In {\em Working Note of the Workshop on Statistical Relational
  Artificial Intelligence at the 24th Conference on Artificial Intelligence},
  19--25.

\bibitem[\protect\citeauthoryear{Gogate and Domingos}{2011}]{GogDo11}
Gogate, V., and Domingos, P.
\newblock 2011.
\newblock {Probabilistic Theorem Proving}.
\newblock In {\em UAI-11 Proceedings of the 27th Conference on Uncertainty in
  Artificial Intelligence},  256--265.

\bibitem[\protect\citeauthoryear{Kazemi and Poole}{2016}]{KazPo16}
Kazemi, S.~M., and Poole, D.
\newblock 2016.
\newblock {Why is Compiling Lifted Inference into a Low-Level Language so
  Effective?}
\newblock In {\em IJCAI-16 Statistical Relational AI Workshop}.

\bibitem[\protect\citeauthoryear{Lauritzen and Spiegelhalter}{1988}]{LauSp88}
Lauritzen, S.~L., and Spiegelhalter, D.~J.
\newblock 1988.
\newblock {Local Computations with Probabilities on Graphical Structures and
  Their Application to Expert Systems}.
\newblock {\em Journal of the Royal Statistical Society. Series B:
  Methodological} 50:157--224.

\bibitem[\protect\citeauthoryear{Milch \bgroup et al\mbox.\egroup
  }{2008}]{MilZeHaKa08}
Milch, B.; Zettelmoyer, L.~S.; Kersting, K.; Haimes, M.; and Kaelbling, L.~P.
\newblock 2008.
\newblock {Lifted Probabilistic Inference with Counting Formulas}.
\newblock In {\em AAAI-08 Proceedings of the 23rd Conference on Artificial
  Intelligence},  1062--1068.

\bibitem[\protect\citeauthoryear{Poole and Zhang}{2003}]{PooZh03}
Poole, D., and Zhang, N.~L.
\newblock 2003.
\newblock {Exploiting Contextual Independence in Probabilistic Inference}.
\newblock {\em Jounal of Artificial Intelligence} 18:263--313.

\bibitem[\protect\citeauthoryear{Poole}{2003}]{Poo03}
Poole, D.
\newblock 2003.
\newblock {First-order Probabilistic Inference}.
\newblock In {\em IJCAI-03 Proceedings of the 18th International Joint
  Conference on Artificial Intelligence}.

\bibitem[\protect\citeauthoryear{Shenoy and Shafer}{1990}]{SheSh90}
Shenoy, P.~P., and Shafer, G.~R.
\newblock 1990.
\newblock {Axioms for Probability and Belief-Function Propagation}.
\newblock {\em Uncertainty in Artificial Intelligence 4} 9:169--198.

\bibitem[\protect\citeauthoryear{Singla and Domingos}{2008}]{SinDo08}
Singla, P., and Domingos, P.
\newblock 2008.
\newblock {Lifted First-order Belief Propagation}.
\newblock In {\em AAAI-08 Proceedings of the 23rd Conference on Artificial
  Intelligence},  1094--1099.

\bibitem[\protect\citeauthoryear{Taghipour and Davis}{2012}]{TagDa12}
Taghipour, N., and Davis, J.
\newblock 2012.
\newblock {Generalized Counting for Lifted Variable Elimination}.
\newblock In {\em Proceedings of the 2nd International Workshop on Statistical
  Relational AI},  1--8.

\bibitem[\protect\citeauthoryear{Taghipour \bgroup et al\mbox.\egroup
  }{2013}]{TagFiDaBl13}
Taghipour, N.; Fierens, D.; Davis, J.; and Blockeel, H.
\newblock 2013.
\newblock {Lifted Variable Elimination: Decoupling the Operators from the
  Constraint Language}.
\newblock {\em Journal of Artificial Intelligence Research} 47(1):393--439.

\bibitem[\protect\citeauthoryear{van~den Broeck and Davis}{2012}]{BroDa12}
van~den Broeck, G., and Davis, J.
\newblock 2012.
\newblock {Conditioning in First-Order Knowledge Compilation and Lifted
  Probabilistic Inference}.
\newblock In {\em Proceedings of the 26th AAAI Conference on Artificial
  Intelligence},  1961--1967.

\bibitem[\protect\citeauthoryear{van~den Broeck and Niepert}{2015}]{BroNi15}
van~den Broeck, G., and Niepert, M.
\newblock 2015.
\newblock {Lifted Probabilistic Inference for Asymmetric Graphical Models}.
\newblock In {\em AAAI-15 Proceedings of the 29th Conference on Artificial
  Intelligence},  3599--3605.

\bibitem[\protect\citeauthoryear{van~den Broeck \bgroup et al\mbox.\egroup
  }{2011}]{BroTaMeDaRa11}
van~den Broeck, G.; Taghipour, N.; Meert, W.; Davis, J.; and Raedt, L.~D.
\newblock 2011.
\newblock {Lifted Probabilistic Inference by First-order Knowledge
  Compilation}.
\newblock In {\em IJCAI-11 Proceedings of the 22nd International Joint
  Conference on Artificial Intelligence}.

\bibitem[\protect\citeauthoryear{van~den Broeck}{2013}]{Bro13}
van~den Broeck, G.
\newblock 2013.
\newblock {\em {Lifted Inference and Learning in Statistical Relational
  Models}}.
\newblock Ph.D. Dissertation, KU Leuven.

\bibitem[\protect\citeauthoryear{Vlasselaer \bgroup et al\mbox.\egroup
  }{2016}]{VlaMeBrRa16}
Vlasselaer, J.; Meert, W.; van~den Broeck, G.; and Raedt, L.~D.
\newblock 2016.
\newblock {Exploiting Local and Repeated Structure in Dynamic Baysian
  Networks}.
\newblock {\em Artificial Intelligence} 232:43--53.

\bibitem[\protect\citeauthoryear{Zhang and Poole}{1994}]{ZhaPo94}
Zhang, N.~L., and Poole, D.
\newblock 1994.
\newblock {A Simple Approach to Bayesian Network Computations}.
\newblock In {\em Proceedings of the 10th Canadian Conference on Artificial
  Intelligence},  171--178.

\end{thebibliography}

\end{document}